\documentclass[letterpaper, 10 pt, conference]{ieeeconf}  

\IEEEoverridecommandlockouts                              
\usepackage{cite}

\usepackage{amsmath,amssymb,amsfonts,amsthm}
\usepackage{graphicx}
\usepackage{textcomp}
\usepackage{xcolor}
\let\labelindent\relax
\usepackage{enumitem}
\usepackage{subcaption}
\usepackage{tikz}
\usepackage{float}
\usepackage{mathrsfs}

\newtheoremstyle{bolddefinition}
  {}{}{\normalfont}{}{\bfseries}{.}{ }{}
\theoremstyle{bolddefinition}
\newtheorem{theorem}{\textbf{Theorem}}
\newtheorem{definition}{\textbf{Definition}}
\newtheorem{problem}{\textbf{Problem}}

\newtheorem{proposition}{\textbf{Proposition}}
\newtheorem{remark}{\textbf{Remark}}

\usepackage{glossaries}
\newacronym{cbf}{CBF}{Control Barrier Function}
\newacronym{clf}{CLF}{Control Lyapunov Function}
\makeglossaries

\usepackage{algorithm}
\usepackage[noend]{algpseudocode}

\usepackage{blindtext}
\usepackage{pgfplots}
\pgfplotsset{compat=1.18}

\newcommand{\figfontsize}{\small}

\DeclareMathOperator*{\argmin}{argmin}

\def\BibTeX{{\rm B\kern-.05em{\sc i\kern-.025em b}\kern-.08em
    T\kern-.1667em\lower.7ex\hbox{E}\kern-.125emX}}

\title{Conflict-Aware Switching for CBF-CLF-Based Multi-Goal Navigation}

\author{Rohan Walia and Kevin Leahy
\thanks{R. Walia and K. Leahy are with the Robotics Engineering Department, Worcester Polytechnic Institute, Worcester, MA 01609, USA. 
        {\tt\small \{rwalia,kleahy\}@wpi.edu}. This work was supported by funding from Amazon Robotics.}%
}
\begin{document}
\algrenewcommand\algorithmiccomment[1]{\hskip0.5em$\triangleright$ #1}

\maketitle

\begin{abstract}
Quadratic programs (QPs) using Control Barrier Functions (CBFs) and Control Lyapunov Functions (CLFs) are widely used for safe control in reach-and-avoid navigation. However, the inherently conflicting nature of CBF and CLF constraints can lead to performance degradation, including slowdowns and deadlocks. This issue is exacerbated in multi-goal scenarios, where multiple nominal control objectives must be satisfied under shared safety constraints. Existing approaches for preemptive safety are often computationally expensive or overly conservative, while methods that relax or switch between nominal objectives are not well-suited for sequential goal-to-goal navigation. To address these limitations, we propose a conflict-aware switching strategy that detects high-conflict conditions and switches between available nominal control objectives to reduce constraint conflict. We apply this approach to multi-agent, multi-goal reach-and-avoid scenarios under CBF-CLF-QP control. Compared to a baseline sequential goal traversal strategy, our method reduces both completion time and timeout rates, demonstrating improved performance in satisfying all nominal control objectives while respecting safety constraints. 
\end{abstract}


\section{Introduction}

Safe navigation tasks, such as reach-and-avoid scenarios and waypoint navigation in crowded environments, require balancing a nominal control objective against safety constraints. Control Barrier Functions (CBFs) and Control Lyapunov Functions (CLFs) have emerged as a widely adopted framework in the controls and robotics community, providing formal guarantees for safety constraint satisfaction while 
preserving progress toward nominal control objectives. By design, CBFs offer a minimally invasive corrective action when implemented via quadratic programs (QPs) \cite{cbf_theory_and_applications}. However, the feasible set of control inputs decreases as the degree of conflict between nominal and safety constraints increases. To an extent, this problem is solved by relaxing the nominal control objective locally to prioritize safety \cite{cbf_theory_and_applications}. Still, operating under conflicting control constraints can significantly degrade system performance, manifesting in the form of slowdowns, and eventually deadlocks.

Slowdowns that are not accounted for by motion planning \cite{reactive_motion_planning} or explicit recovery behavior modeling may lead to unpredictable delays and failure of mission-critical tasks. Forward propagation of the system model could help detect and avoid slowdowns by predicting proximity to high-conflict regions. For example, \cite{mpc_cbf_discrete} integrates a discrete CBF constraint into a Model Predictive Control (MPC) framework, achieving ``predictive safety" by preemptively avoiding unsafe regions. However, such approaches can become computationally expensive.

Alternatively, the geometric interaction between CBF and CLF constraints along system trajectories can be exploited for computationally efficient preemptive deadlock avoidance, without requiring forward propagation of the system model. Such techniques leverage the property that CBF-CLF QPs are inherently prone to \textit{asymptotically stable undesired equilibria} \cite{cbf_clf_deadlock_detection, cbf_clf_deadlock_detection_stronger_guarantees}, commonly known as deadlocks. As shown in \cite{cbf_clf_deadlock_detection}, one solution is to eliminate all undesired equilibria entirely. However, \cite{cbf_clf_deadlock_avoidance_decentrallized_multi_agent} 
demonstrates that this can be overly conservative. Altering the system trajectory to avoid every known deadlock may degrade performance by diverting the system from its nominal control objective, particularly when many deadlocks are never encountered in practice.

In the presence of multiple nominal control objectives (e.g., goal-to-goal navigation in warehouse robotics applications), temporarily conflicted regions can be avoided by switching to an alternative objective located in a less constrained region. In \cite{dimitra_constraint_selection}, Lee et al. propose an optimization strategy that maximizes feasibility under a given set of predetermined hard and soft constraints. Constraints are prioritized according to feasibility, with the least feasible ``soft'' constraints temporarily relaxed to ensure that the remaining constraints can be satisfied at each timestep. However, this strategy enforces the simultaneous satisfaction of the most feasible constraints at any given time, which differs fundamentally from the sequential nature of goal-to-goal navigation. Additionally, this strategy does not guarantee that  once disregarded ``soft" constraints will be satisfied eventually.


Event-triggered scheduling allows switching between nominal control objectives one at a time. In \cite{event_triggered_scheduling}, Tabuada et al. use a feedback controller that switches to an alternative control objective based on a thresholded input error. However, this approach relies on predetermined execution times for each control task, and primarily supports switching between only two nominal control objectives. Although it can be extended to accommodate more objectives, it requires at least one constraint to take precedence over others.

These limitations highlight the need for a priority-agnostic switching strategy that favors conflict reduction in the presence of multiple nominal control objectives. In this work, we formulate a switching strategy for multi-agent-multi-goal reach-and-avoid scenarios implemented through CLF-CBF QPs. By adopting this strategy, we empirically demonstrate reduction in timeouts and average completion time for satisfying all control objectives for each agent. Our specific contributions include:

\begin{enumerate}
    \item Identifying a heuristic that captures degree of conflict for a given pair of safety and nominal control objective constraints; and
    \item Integration of this strategy for multi-agent-multi-goal reach-and-avoid scenarios; and 
    \item Empirically demonstrating improvement in performance of multi-agent-multi-goal reach-and-avoid scenarios using switching by virtue of timeout percentage and completion time.
\end{enumerate}

\section{Problem Formulation}

We use the following notation throughout the paper. $\mathbb{R}$ and $\mathbb{R}_+$ represent the set of real and non-negative real numbers respectively. $\mathbf{x}$ denotes a vector. A function $\phi: \mathbb{R} \rightarrow \mathbb{R}$ is an extended class-$\mathcal{K_{\infty}}$ function if $\phi(0)=0$ and $\phi$ is strictly increasing on the interval $(-\infty,\infty)$. The interior and boundary of a closed set $\mathcal{S}$ are denoted as $\mathrm{Int}(\mathcal{S})$ and $\partial \mathcal{S}$ respectively. $\nabla V$ represents the gradient of a function $V$ with respect to state $\mathbf{x}$. The $r$-th order Lie derivative of a continuously differentiable function $V:\mathbb{R}^n \mapsto \mathbb{R}$ along a vector field $f: \mathbb{R}^n \mapsto \mathbb{R}^n$ at a point $x \in \mathbb{R}^n$ is denoted by $L^r_fV(x) \triangleq \frac{\partial^r V}{\partial x^r} f(x)$, where $r$ is omitted if $r=1$. $\mathbf{1}_n \in \mathbb{R}^n$ denotes a vector of ones, and $I_n \in \mathbb{R}^{n \times n}$ denotes an identity matrix. For vector $v \in \mathbb{R}^n$, we define the orthogonal projection matrix onto the subspace perpendicular to $v$ as $\mathcal{P}_v = \|v\|^2 I_n - vv^T \in \mathbb{R}^{n \times n}$. $\mathfrak{g}$ denotes a goal location.

\subsection{CBF-CLF QPs}

Consider a control-affine system of the form:

\begin{equation}
    \dot{\mathbf{x}} = f(\mathbf{x}) + g(\mathbf{x})\mathbf{u},
    \label{eq:control_affine_system}
\end{equation}
where $\mathbf{x} \in \mathbb R^n$ is the state, $\mathbf{u} \in \mathcal{U} \subseteq \mathbb R^m$ is the control input, and functions $f: \mathbb R^n \mapsto \mathbb R^n$ and $g: \mathbb R^n \mapsto \mathbb R^{n \times m}$ are the known, locally Lipschitz drift vector and control matrix respectively. Let $h: \mathbb R^n \mapsto \mathbb R$ be a continuously differentiable function. A safe set $\mathcal{C} \subset \mathbb R^n$ is defined as the zero super-level set of $h$ such that
\begin{subequations}
\begin{align}
    \mathcal{C} &= \{\mathbf{x} \in \mathbb R^n \mid h(\mathbf{x}) \geq 0\}, \\
    \partial \mathcal{C} &= \{\mathbf{x} \in \mathbb R^n \mid h(\mathbf{x}) = 0\}.
\end{align}
\label{eq:safe_set}
\end{subequations}
\begin{definition}[Forward Invariance]
    Set $\mathcal{C}$ is said to be \emph{forward invariant} with respect to system \eqref{eq:control_affine_system} if for every $\mathbf{x}_0 \in \mathcal{C}$, $\mathbf{x}(t) \in \mathcal{C} \;\forall t \geq 0$ where $\mathbf{x}(0) = \mathbf{x}_0$. 
\end{definition}

\begin{definition}[Control Barrier Function]
    Given a set $\mathcal{C}$ defined by \eqref{eq:safe_set} for a continuously differentiable function $h: \mathbb R^n \mapsto \mathbb{R}$ with $0$ a regular value, $h$ is a \emph{\acrfull{cbf}} for system \eqref{eq:control_affine_system} with respect to $\mathcal{C}$ if there exists a locally Lipschitz function $\phi \in \mathcal{K}_{\infty}$ such that, $\forall \mathbf{x} \in \mathbb R^n$,
    \begin{equation}\label{eq:cbf_condition}
        \sup_{\mathbf{u} \in \mathcal{U}}L_fh(\mathbf{x}) + L_gh(\mathbf{x})\mathbf{u} \geq -\phi\left(h(\mathbf{x})\right).
    \end{equation}
\end{definition}

\begin{definition} [Control Lyapunov Function]
    for \eqref{eq:control_affine_system}, a positive-definite function $V$ is defined as a Control Lyapunov Function if it satisfies: 
    \begin{equation}
        \inf_{\mathbf{u} \in \mathcal{U}} \; L_f V(\mathbf{x}) + L_g V(\mathbf{x}) \mathbf{u}
        \leq -\gamma(V(\mathbf{x})\big),
        \label{eq:clf_condition_slack}
    \end{equation}
    where $\gamma$ is a class $\mathcal{K}$ function.
\end{definition}

\begin{theorem} [\hspace{-0.3pt}\cite{cbf_theory_and_applications}, Thm. 2]
    Given a set $\mathcal{C}$ as defined in \eqref{eq:safe_set} for a continuously differentiable function $h: \mathbb R^n \mapsto \mathbb R$, any locally Lipschitz continuous controller $\mathbf{u}$ that satisfies \eqref{eq:cbf_condition} will render $\mathcal{C}$ forward invariant. 
    \label{thm:cbf_forward_invariance}
\end{theorem}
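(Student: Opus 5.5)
Since $\mathbf{u}$ is locally Lipschitz and $f,g$ are locally Lipschitz, I will view $\mathbf{u}$ as a state feedback $\mathbf{u}=k(\mathbf{x})$. The closed loop $\dot{\mathbf{x}}=f(\mathbf{x})+g(\mathbf{x})k(\mathbf{x})$ then has a locally Lipschitz right-hand side. Picard--Lindel\"of therefore gives, for every $\mathbf{x}_0\in\mathcal{C}$, a unique solution $\mathbf{x}(t)$ on a maximal interval $[0,\tau_{\max})$. Forward invariance should be read on this interval, or for all $t\ge 0$ under the usual assumption of forward completeness.

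Along this solution, define the scalar function $\eta(t)=h(\mathbf{x}(t))$. It is continuously differentiable because $h\in C^1$ and $\mathbf{x}(\cdot)$ is $C^1$. By the chain rule and the assumption that $k$ satisfies \eqref{eq:cbf_condition} pointwise,
\begin{equation*}
\dot\eta(t)=L_fh(\mathbf{x}(t))+L_gh(\mathbf{x}(t))k(\mathbf{x}(t))\ \ge\ -\phi(\eta(t)).
\end{equation*}
Here the inequality in \eqref{eq:cbf_condition} is understood for the specific input $k(\mathbf{x})$, not merely for the supremum. I also take $\eta(0)=h(\mathbf{x}_0)\ge 0$.

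Next I compare $\eta$ with the solution of the scalar initial value problem $\dot y=-\phi(y)$, $y(0)=\eta(0)$. Because $\phi$ is locally Lipschitz, this problem has a unique solution. Since $\phi(0)=0$, $y\equiv 0$ is an equilibrium. If $y(0)\ge 0$, uniqueness forbids $y$ from crossing $0$, so $y(t)\ge 0$ for all $t$; in fact $y$ is nonincreasing and converges to $0$. The comparison lemma (Khalil, Lemma 3.4) applies to the differential inequality $\dot\eta\ge-\phi(\eta)$ with a locally Lipschitz right-hand side, and gives $\eta(t)\ge y(t)\ge 0$ on the common interval of existence. Hence $h(\mathbf{x}(t))\ge 0$, i.e., $\mathbf{x}(t)\in\mathcal{C}$.

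The main obstacle is justifying the comparison step cleanly. If I want to avoid citing the comparison lemma, I will argue directly by contradiction. Suppose $\eta(t_1)<0$ for some $t_1$, and let $t_0=\sup\{t\le t_1:\eta(t)\ge 0\}$, so that $\eta(t_0)=0$ and $\eta<0$ on $(t_0,t_1]$. On that interval $-\phi(\eta)>0$ because $\phi$ is strictly increasing with $\phi(0)=0$, so $\dot\eta>0$. Thus $\eta$ is increasing on $(t_0,t_1]$, which forces $\eta(t_1)>\eta(t_0)=0$, a contradiction. This argument is simpler and needs only strict monotonicity of $\phi$. I would present this version, noting that the regular-value assumption on $h$ is not needed for invariance itself, only for the existence of a $\mathbf{u}$ satisfying \eqref{eq:cbf_condition}.
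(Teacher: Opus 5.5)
Your proof is correct, and it takes a different route from the one behind the cited result. The paper itself gives no proof and simply defers to Ames et al. The standard argument there rests on Nagumo's theorem and uses only boundary information: on $\partial\mathcal{C}$ the controller gives $\dot h \ge -\phi(0) = 0$. The regular-value hypothesis is what identifies the tangent cone of $\mathcal{C}$ at boundary points with the half-space $\{v : \nabla h^T v \ge 0\}$, so that $\dot h \ge 0$ becomes Nagumo's sub-tangentiality condition. That route also needs uniqueness of solutions, hence a Lipschitz controller.

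Your contradiction argument instead uses \eqref{eq:cbf_condition} just outside $\mathcal{C}$, where $h<0$ forces $-\phi(h)>0$ and hence a strictly positive derivative of $h(\mathbf{x}(t))$. Together with the mean value theorem on $[t_0,t_1]$, this rules out leaving $\mathcal{C}$. It needs neither Nagumo nor regularity of $0$, neither the comparison lemma nor local Lipschitzness of $\phi$, and it applies to every closed-loop solution, unique or not. What you pay is that the inequality must hold for the chosen input off $\mathcal{C}$. The paper's definition grants this, since it is imposed for all $\mathbf{x}\in\mathbb{R}^n$, whereas the Nagumo route needs it only on $\partial\mathcal{C}$.

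Your first, comparison-lemma route buys something the contradiction argument does not. The bound $h(\mathbf{x}(t)) \ge y(t)$ with $y(t)\to 0$ remains valid when $h(\mathbf{x}_0)<0$, and this is the estimate behind the asymptotic stability of $\mathcal{C}$ asserted in the original theorem.

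One small correction to your closing remark: regularity of $0$ plays no role in the existence of an input satisfying \eqref{eq:cbf_condition}. That existence is exactly what the supremum condition asserts, and $\nabla h \neq 0$ does not rule out $L_g h = 0$. The hypothesis's role in the cited result is the Nagumo step described above.
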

Reach-and-avoid problems can be implemented for a CLF and CBF by solving for the optimum control input $\mathbf{u(x)}$ that respects the CLF and CBF constraints through the following quadratic program:

\begin{equation}
\begin{aligned}
    \mathbf{u(x)} = \argmin_{(\mathbf{u}, \delta) \in \mathbb{R}^{m+1}} \quad 
    & \tfrac{1}{2} \mathbf{u}^T \mathbf{u} + p\,\delta^2 \\[4pt]
    \text{s.t.} \quad 
    & L_f V(\mathbf{x}) + L_g V(\mathbf{x}) \mathbf{u} \le -\alpha\big(V(\mathbf{x})\big) + \delta \\[4pt]
    & L_f h(\mathbf{x}) + L_g h(\mathbf{x}) \mathbf{u} \ge -\phi\big(h(\mathbf{x})\big) \\[4pt]
    & \delta \ge 0.
    \label{eq:cbf_clf_qp}
\end{aligned}
\end{equation}
Here $\delta$ is a slack variable which relaxes the CLF constraint (nominal control objective) to prioritize the safety constraint imposed by the CBF, and $p$ is a slack penalty that penalizes excessive relaxation. 

\subsection{Multi-objective control}

Consider a set of $k$ distinct goals implemented through a set of $k$ equal-priority control objectives, defined as:
\begin{equation}
    \begin{aligned}
        \mathcal{G}(t) &= \left\{ \mathfrak{g}_i \in \mathbb{R}^n \mid i \in \{1, \dots, k\} \right\},\\
        \mathcal{V}(t) &= \left\{ V_i : \mathcal{X} \rightarrow \mathbb{R}_{\geq 0} \mid i \in \{1, \dots, k\} \right\},
        \label{eq:goal_constraint_set}
    \end{aligned}
\end{equation}
where $\mathcal{G}(t)$ and $\mathcal{V}(t)$ are the set of remaining goals and CLF constraints at time $t$, respectively, and $\mathcal{X} = \mathbb{R}^n$ represents the state space. Each CLF $V_i \in \mathcal{V}(t)$ is constructed with respect to its corresponding goal $\mathfrak{g}_i \in \mathcal{G}(t)$, such that $V_i(x) = 0 \iff x = \mathfrak{g}_i$. For a multi-obstacle setting, we define a set of obstacles $\mathcal{O} = \{o_1, \dots, o_r\}$ where $o_j \in \mathbb{R}^n$, and a corresponding set of CBF functions $\mathcal{H} = \{h_j : \mathcal{X} \rightarrow \mathbb{R},\ j \in \{1, \dots, r\}\}$, where each $h_j$ encodes a safety constraint with respect to obstacle $o_j$. For goal-to-goal navigation, it is not feasible to satisfy all nominal control objectives simultaneously. Therefore, we satisfy the \textit{engaged} nominal constraint $V^* \in \mathcal{V}(t)$ and enforce safety with respect to all obstacles simultaneously through a QP of the form:
\begin{equation}
    \begin{aligned}
        \mathbf{u(x)} = \argmin_{(\mathbf{u}, \delta) \in \mathbb{R}^{m+1}} \quad 
        & \tfrac{1}{2} \mathbf{u}^T \mathbf{u} + p\delta^2 \\[4pt]
        \text{s.t.} \quad 
        & L_f h_j + L_g h_j\, \mathbf{u} \ge -\phi(h_j), \quad \forall h_j \in \mathcal{H} \\[4pt]
        & L_f V^* + L_g V^*\, \mathbf{u} \le -\alpha(V^*) + \delta \\[4pt]
        & \delta \ge 0
        \label{eq:ncbf_nclf_qp}
    \end{aligned}
\end{equation}
A trivial switching condition simply satisfies the engaged nominal constraint $V^*$ as:
\begin{equation}
    \|\mathbf{x} - \mathfrak{g}^*\|^2 \leq \epsilon_\mathfrak{g}
    \label{eq:goal_constraint_satisfaction}
\end{equation}
for some common goal satisfaction threshold $\epsilon_\mathfrak{g} \in \mathbb{R^+}$. A trivial switching policy $\pi: \mathcal{X} \times 2^{\mathcal{V}} \rightarrow \mathfrak{g}_{next}$ simply selects the next goal $g_{next}$ in a conflict-agnostic manner as:
\begin{equation}
    \pi_{\text{seq}}(\mathbf{x}, \mathcal{V}(t)) = \argmin_{i \in \mathcal{I}(t)} \; i
    \label{eq:policy_next_clf}
\end{equation}
where $\mathcal{I}(t) = \{i : V_i \in \mathcal{V}(t)\}$ is the set indices of the remaining nominal objectives. Once goal $\mathfrak{g}_i$ is reached, goal set $\mathcal{G}$ and constraint set $\mathcal{V}$ are updated as:
\begin{equation}
    \begin{aligned}
        \mathcal{G}(t^+) &= \mathcal{G}(t) \setminus \{\mathfrak{g}_i\} \\
        \mathcal{V}(t^+) &= \mathcal{V}(t) \setminus \{V_i\}.
    \end{aligned} 
    \label{eq:goal_constraint_set_update}
\end{equation}
This process is repeated until the following \emph{terminal state} is achieved:
\begin{equation}
\begin{aligned}
    \mathcal{G} &= \emptyset \\
    \mathcal{V} &= \emptyset.
\end{aligned}
\label{eq:multi_goal_reach_and_avoid_terminal_state}
\end{equation}

\subsection{Conflict-aware switching}
Existence of multiple goals motivates switching to an alternative nominal control objective that is more compatible with a given safety constraint. In order to do that, we need to identify when a given pair of CBF-CLF constraints are most incompatible. Consider the Lagrangian of QP \eqref{eq:ncbf_nclf_qp}:
\begin{equation}
    \mathcal{L}(\mathbf{u}, \delta) = \mathcal{J}(\mathbf{u}, \delta) 
    + \lambda_1 \mathcal{C}_1(\mathbf{u}, \delta)
    + \lambda_2 \mathcal{C}_2(\mathbf{u})
    \label{eq:lagrangian}
\end{equation}
where
\begin{align}
    \mathcal{J}(\mathbf{u}, \delta) &= \frac{1}{2}\mathbf{u}^T\mathbf{u} + p\delta^2 \\[4pt]
    \mathcal{C}_1(\mathbf{u}, \delta) &= L_f V^* + L_g V^*\,\mathbf{u} + \alpha(V^*) - \delta \\[4pt]
    \mathcal{C}_2(\mathbf{u}) &= -\left(L_f \mathbf{h} + L_g \mathbf{h}\,\mathbf{u} + \boldsymbol{\phi}(\mathbf{h})\right).
\end{align}
From the KKT conditions for \eqref{eq:lagrangian} \cite{cbf_clf_deadlock_detection}:
\begin{equation}
     \frac{\partial \mathcal{L}}{\partial \mathbf{u}} = 
    \frac{\partial \mathcal{J}}{\partial \mathbf{u}} + 
    \lambda_1 \frac{\partial \mathcal{C}_1}{\partial \mathbf{u}} + 
    \lambda_2 \frac{\partial \mathcal{C}_2}{\partial \mathbf{u}} = 0
    \label{eq:kkt_condition}
\end{equation}
where:
\begin{equation}
    \frac{\partial \mathcal{C}_1}{\partial \mathbf{u}} = L_g V^*, \; \frac{\partial \mathcal{C}_2}{\partial \mathbf{u}} = -L_g \mathbf{h}.
\end{equation}
At a deadlock ($u = 0$), $\mathcal{C}_1$ and $\mathcal{C}_2$ oppose each other. Therefore, their gradients would be anti-parallel:
\begin{align}
     \frac{\partial \mathcal{C}_1}{\partial \mathbf{u}} &= - \frac{\partial \mathcal{C}_2}{\partial \mathbf{u}} \\
     \implies  L_g V^* &=  -(-L_g \mathbf{h}) \\
     \implies (\nabla V &- \nabla h) g = 0
    \label{eq:anti_parallel_constraints}
\end{align}
From \eqref{eq:anti_parallel_constraints}, the following holds true for non-zero values of $L_gV$ and $L_gh$:
\begin{equation}
    L_g V^* =  L_g \mathbf{h} \implies \nabla V^* = \nabla h .
    \label{eq:sufficient_deadlock_condition}
\end{equation}
\begin{remark}
Condition \eqref{eq:sufficient_deadlock_condition} is sufficient but not necessary for detecting deadlocks. For control-affine system \eqref{eq:control_affine_system}, a necessary 
condition for deadlock detection is established in Theorem 1, Eq. (6) of \cite{cbf_clf_deadlock_detection} as:
\begin{equation}
    f(x) = \lambda_1 G \nabla V - \lambda_2 G\nabla h,
    \label{eq:necessary_deadlock_condition}
\end{equation}
where $G = gg^T$. However, $\lambda_1 G \nabla V$ and $\lambda_2 G \nabla h$ vanish when the relative degree between the system dynamics and the CLF or CBF is greater than one \cite{HOCLFs, ECBFs}. Thus, deadlock detection becomes impossible.
\label{rm:necessary_deadlock_condition}
\end{remark}
Following \eqref{eq:sufficient_deadlock_condition} and Remark \ref{rm:necessary_deadlock_condition}, we define a conflict as follows:
\begin{definition}[Constraint Conflict]
The conflict between constraints $\mathcal{C}_1$ and $\mathcal{C}_2$ at state $x$ is the proximity of their gradients $\nabla V$ and $\nabla h$ to the following condition: 
\begin{equation}
     \nabla V = \nabla h
    \label{eq:anti_parallel_condition}
\end{equation}
\end{definition}
That is, conflict is high when $\nabla V$ and $\nabla h$ point in same 
direction, forcing $\mathcal{C}_1$ and $\mathcal{C}_2$ to demand opposing 
corrections. A conflict heuristic which measures the proximity to this condition can be defined as follows:

\begin{definition}[Conflict Heuristic]
    A conflict heuristic $\Phi(x, V, h) :  \mathcal{X} \times \mathcal{V} \times \mathcal{H} \rightarrow \mathbb{R}$ is a function that:
    \begin{itemize}
        \item Attains its maximum value $C$ when the constraint gradients are aligned, i.e., 
        $\nabla V = \nabla h$, corresponding to maximum conflict (deadlock)
        \item Returns a strictly smaller value otherwise, decreasing monotonically as the constraint gradients diverge from alignment
    \end{itemize}
\end{definition}

To this end, we define our main problem as follows: 
\begin{problem}
    Given $k$ distinct equal-priority goals $\mathcal{G}(0) = \{\mathfrak{g}_1, \dots, \mathfrak{g}_k\}$, corresponding control objectives $\mathcal{V}(0) = \{V_1, \dots, V_k\}$, and a control barrier function $h \in \mathcal{H}(t)$, find a heuristic function $\phi: \mathcal{X} \times \mathcal{V} \times \mathcal{H} \rightarrow \mathbb{R}$ such that:
    \begin{equation}
        \mathcal{V}^*(t) = \pi(\mathbf{x}, \mathcal{V}(t))\big|_{h} = \argmin_{V_i \in \mathcal{V}(t)} \phi(\mathbf{x}, V_i, h),
        \label{eq:policy_min_phi}
    \end{equation}
    where $\mathcal{V}^*(t)$ is the set of least-conflicted constraints at time $t$, and:
    \begin{equation}
        i^* = \argmin_{i \in \mathcal{I}(t)} \mathcal{V}^*(t),
        \label{eq:policy_min_clf}
    \end{equation}
    where $i^*$ is the index of least-conflicted, minimum-valued nominal constraint objective at time $t$.
    \label{prob:main}
\end{problem}

\section{Methodology}

In this section, we discuss the limitations of an existing alignment-based heuristic that measures conflict between a nominal control objective and a safety constraint. We then formalize the notion of a consistent conflict heuristic and show that any such heuristic guarantees finding the least-conflicted set of goals per \eqref{eq:policy_min_phi}. Finally, we present a new alignment-based heuristic that satisfies this consistency criterion.

\subsection{Alignment-based conflict heuristic}

In \cite{cbf_clf_deadlock_detection}, Reis et al. prove that for CBF-CLF QPs of the form \eqref{eq:cbf_clf_qp}, asymptotically stable equilibria exist within the interior and on the boundary of the safe set. The interior equilibria are desired, as they drive the system towards completion of the nominal control objective (in this case, ensuring $V \approx 0$). Boundary equilibria occur where both CLF and CBF constraints are active. Since this represents a deadlock,  boundary equilibria are undesired. Following \eqref{eq:necessary_deadlock_condition}, Reis et al. define a collinearity metric $\mathcal{D}:\mathcal{X} \times \mathcal{SO}(n) \to \mathbb{R}_{\ge 0}$ to track proximity to a deadlock using (eq 24, \cite{cbf_clf_deadlock_detection}) as:
\begin{equation}
    \mathcal{D}(\mathbf{x}, Q) = \frac{1}{2} \nabla V(\mathbf{x}, Q)^T G(P_f + P_{G\nabla h}) G \nabla V(\mathbf{x}, Q)
    \label{eq:reis_collinearity_heuristic_original}
\end{equation}
where $P_f$ and $P_{G\nabla h}$ represent scaled orthogonal projections. Here $Q \in \mathcal{SO}(n)$ is a rotation state for the Lyapunov function $V$, designed to eliminate boundary equilibria by reshaping the original Lyapunov function $V$. As mentioned earlier, eliminating all undesired equilibria has been shown to be an overly conservative approach \cite{cbf_clf_deadlock_avoidance_decentrallized_multi_agent}. To use \eqref{eq:reis_collinearity_heuristic_original} as a conflict heuristic without modifying the original CLF constraint, we set $Q = I_n$ to obtain:
\begin{equation}
    \mathcal{D}(\mathbf{x}) = \frac{1}{2} \nabla V(\mathbf{x})^T G(P_f + P_{G\nabla h}) G \nabla V(\mathbf{x}).
    \label{eq:reis_collinearity_heuristic}
\end{equation}
To demonstrate the need for this heuristic, consider a 2D single integrator driftless dynamics model ($f(\mathbf{x}) = 0$ in \eqref{eq:control_affine_system}) with identity control matrix g. Then, \eqref{eq:reis_collinearity_heuristic} becomes


\begin{equation}
    \mathcal{D}(\mathbf{x}) = \frac{1}{2} \nabla V(\mathbf{x})^T P_{\nabla h} \nabla V(\mathbf{x}).
    \label{eq:D_simplified}
\end{equation}
Expanding scaled orthogonal projection $P_{\nabla h}$:

\begin{equation}
    \mathcal{D}(\mathbf{x}) = \frac{1}{2} \nabla V(\mathbf{x})^T\left( \frac{\nabla h(\mathbf{x}) \nabla h(\mathbf{x})^T}{\|\nabla h(\mathbf{x})\|^2} \right) \nabla V(\mathbf{x}).
    \label{eq:D_expanded}
\end{equation}
This heuristic helps identify when $\nabla h$ and $\nabla V$ are collinear. However, due to its positive semi-definite nature, it does not differentiate between the best case ($\nabla h$ and $\nabla V$ are antiparallel) and worst case ($\nabla h$ and $\nabla V$ are parallel). This is shown in Fig.  \ref{fig:collinearity_cos_sim_dif_collinearity}.
\begin{figure} [h]
    \centering
    \includegraphics[width=1\linewidth]{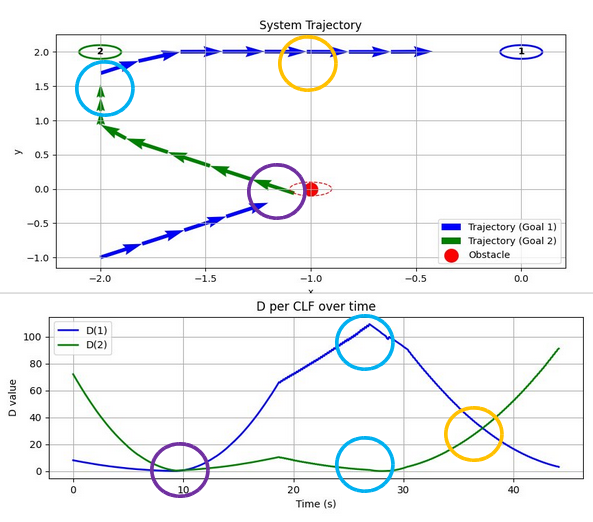}
    \caption{\textbf{Inconsistent alignment-based heuristic.} An agent is tasked with reaching goals 1 and 2, switching goals based on a thresholded value of $\mathcal{D}$ as defined in \eqref{eq:D_expanded}. Trajectories and the corresponding plot of $\mathcal{D}$ are color-coded by the engaged goal. The purple and blue annotations highlight when the system encounters the obstacle and reaches goal $\mathbf{g}_2$, respectively. Notice that $\mathcal{D}$ fails to distinguish between a head-on obstacle approach and an obstacle-free goal approach, as both produce near-zero $\mathcal{D}$ values. This inconsistency motivates the need for a consistent alignment-based heuristic $\phi_\theta$ 
\eqref{eq:cos_sim}.}
    \label{fig:collinearity_cos_sim_dif_collinearity}
\end{figure}

To find the least conflicted goal,  a heuristic $\phi$ needs to rank all available nominal constraints by establishing partial order on set $\mathcal{V}(t)$ for a given control barrier function $h$ at state $x$. This requires the heuristic to be \emph{consistent} in the following sense:

\begin{definition}[Consistent Conflict heuristic]
Given sets $\mathcal{X}$, $\mathcal{V}$ as defined in \eqref{eq:goal_constraint_set} and 
$\mathcal{H}$ as defined in \eqref{eq:ncbf_nclf_qp}, a function 
$\phi : \mathcal{X} \times \mathcal{V} \times \mathcal{H} \to \mathbb{R}$ is a consistent 
conflict heuristic if, for any $V_1, V_2 \in \mathcal{V}$ and $h \in \mathcal{H}$,
\begin{equation}
    \phi(\mathbf{x}, V_1, h) \leq \phi(\mathbf{x}, V_2, h)
    \label{eq:consistent_heuristic}
\end{equation}
iff $(V_1, h)$ is no more conflicted than $(V_2, h)$ at $x$.
\label{def:consistent_heuristic}
\end{definition}
If a consistent heuristic is available, the following theorem guarantees finding the least-conflicted goal at time $t$:

\begin{theorem}[Optimal and Complete Goal-Selection Policy]
Let $\mathcal{V}(t)$ be a non-empty finite set of control objectives and let
$\phi : \mathcal{X} \times \mathcal{V} \times \mathcal{H} \to \mathbb{R}$ be a consistent
conflict heuristic. Then the policy
\[
    \pi(\mathbf{x}, \mathcal{V}(t))\big|_{h} = \argmin_{V_i \in \mathcal{V}(t)} \phi(\mathbf{x}, V_i, h)
\]
selects the least-conflicted objective with respect to $h \in \mathcal{H}$ at $x \in \mathcal{X}$.
\label{thm:optimal_policy}
\end{theorem}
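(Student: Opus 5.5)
The argument will be a direct one: existence of the minimizer on a finite set, followed by translating the order on $\phi$-values into the conflict order through Definition~\ref{def:consistent_heuristic}. Fix $\mathbf{x}\in\mathcal{X}$ and $h\in\mathcal{H}$. Consider the finite multiset of real numbers $\{\phi(\mathbf{x},V_i,h) : V_i\in\mathcal{V}(t)\}$.

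\emph{Completeness.} Because $\mathcal{V}(t)$ is non-empty and finite and $\phi$ is real-valued, this set has a minimum value $\phi^\star$. Hence
\[
\mathcal{V}^*(t)=\argmin_{V_i\in\mathcal{V}(t)}\phi(\mathbf{x},V_i,h)=\{V_i : \phi(\mathbf{x},V_i,h)=\phi^\star\}
\]
is non-empty, so the policy always returns at least one objective. A unique index can then be extracted as in \eqref{eq:policy_min_clf}, because $\mathcal{I}(t)$ is a finite subset of $\mathbb{N}$ and therefore has a least element.

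\emph{Optimality.} Take any $V_a\in\mathcal{V}^*(t)$ and any $V_b\in\mathcal{V}(t)$. By minimality, $\phi(\mathbf{x},V_a,h)\le\phi(\mathbf{x},V_b,h)$. The ``only if'' direction of \eqref{eq:consistent_heuristic} then gives that $(V_a,h)$ is no more conflicted than $(V_b,h)$ at $\mathbf{x}$. Since $V_b$ was arbitrary, every selected objective is least-conflicted.

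For the converse, suppose $V_c$ is least-conflicted, so that $(V_c,h)$ is no more conflicted than $(V_a,h)$. The ``if'' direction of \eqref{eq:consistent_heuristic} gives $\phi(\mathbf{x},V_c,h)\le\phi(\mathbf{x},V_a,h)=\phi^\star$, hence $V_c\in\mathcal{V}^*(t)$. So $\mathcal{V}^*(t)$ is exactly the set of least-conflicted objectives, which is the claim.

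\emph{Main obstacle.} The difficulty is not computational. It is that ``no more conflicted than'' is only an informal relation in the paper, defined through proximity to \eqref{eq:anti_parallel_condition}. I would make explicit that this relation is taken to be a total preorder on $\mathcal{V}(t)$ for fixed $(\mathbf{x},h)$; the biconditional in \eqref{eq:consistent_heuristic} effectively forces this, since it mirrors the total order on $\mathbb{R}$. Ties, meaning objectives with equal $\phi$, are handled by returning a set in \eqref{eq:policy_min_phi} rather than a single element.
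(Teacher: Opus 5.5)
Your proof is correct and follows the same route as the paper's. Both arguments show that the minimum of $\phi$ is attained on the non-empty finite set $\mathcal{V}(t)$, then use the forward direction of the consistency biconditional to conclude that any minimizer is no more conflicted than every other objective; your converse (every least-conflicted objective lies in the argmin set) and your tie-breaking remark are valid additions beyond what the paper proves, but they do not change the approach.
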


\begin{proof}
Since $\mathcal{V}(t)$ is non-empty and finite, and $\phi$ maps into $\mathbb{R}$, the minimum
value of $\phi$ over $\mathcal{V}(t)$ is attained by at least one element. Since $\phi$ is a
consistent conflict heuristic, by \eqref{eq:consistent_heuristic}, the minimizer of $\phi$
corresponds to an objective that is no more conflicted than any other $V_i \in \mathcal{V}(t)$.
Hence $\pi$ selects a least-conflicted objective if one exists.
\end{proof}

\subsection{Cosine-Similarity based heuristic}

We now introduce an alternative alignment-based heuristic $\phi_{\theta} : \mathcal{X} \times \mathcal{V} \times \mathcal{H} \to [-1, 1]$ that measures conflict by checking the cosine similarity between $\nabla V$ and $\nabla h$ as:

\begin{equation}
    \phi_{\theta}(\mathbf{x}, V(\mathbf{x}), h(\mathbf{x})) = \frac{\nabla V(\mathbf{x})^T \nabla h(\mathbf{x})}{\|\nabla V(\mathbf{x})\| \|\nabla h(\mathbf{x})\|}.
    \label{eq:cos_sim}
\end{equation}
As shown in Fig. \ref{fig:collinearity_cos_sim_diff_cos_sim}, this heuristic is able to differentiate between the best and worst case scenarios, thus imposing a partial order on the constraint set $\mathcal{V}(t)$. A higher value of $\phi_{\theta}$ would correspond to a higher degree of conflict between the engaged CLF and CBF, indicating that a slowdown (and potentially, a deadlock) is imminent. 

\begin{figure}[h]
    \centering
    \includegraphics[width=1\linewidth]{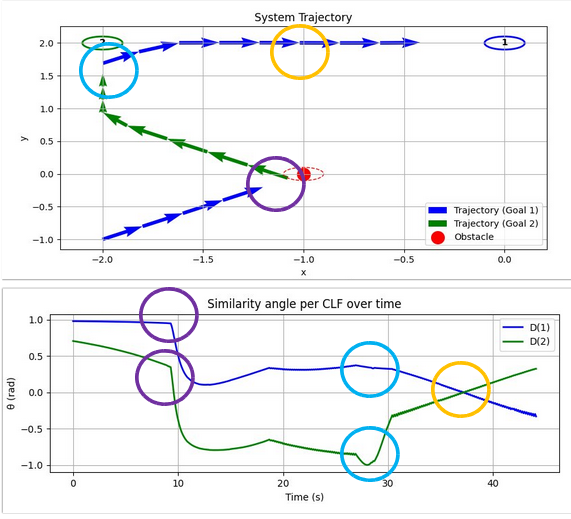}
    \caption{\textbf{Consistent alignment-based heuristic.} Trajectories and the corresponding plot of $\phi_{\theta}$ are color-coded by the engaged goal. For the engaged goal 1 (blue curve), $\phi_{\theta} \approx 1$ when conflict is maximal; the obstacle lies in the line-of-sight between the system and goal 1. For the engaged goal 2 (green curve), $\phi_{\theta} \approx -1$ when conflict is minimal; the obstacle lies directly the system while it approaches goal 2.}
    \label{fig:collinearity_cos_sim_diff_cos_sim}
\end{figure}

Relying on directional information alone is too conservative, as a conflict can be detected from anywhere in the state space. To localize conflict detection, we use Guassian scaling to ensure a smooth decay of conflict values away from the safety boundary:

\begin{equation}
\begin{aligned}
    \phi'_{\theta}(\mathbf{x}, V(\mathbf{x}), h(\mathbf{x})) &= f(h(\mathbf{x}), \sigma) \; \phi_{\theta}(\mathbf{x}, V(\mathbf{x}), h(\mathbf{x})), \\[6pt]
    \text{where} \quad f(h(\mathbf{x}), \sigma) &= \frac{1}{\sigma\sqrt{2\pi}} \exp\left(-\frac{h(\mathbf{x})^2}{2\sigma^2}\right).
    \label{eq:cos_sim_scaled}
\end{aligned}
\end{equation}
Here, $\sigma$ dictates the proximity of activation of the conflict with respect to the safety boundary imposed by $h$. This makes $\phi'_{\theta}$ \textit{locally consistent}:

\begin{proposition}[Local Consistency of $\phi'_\theta$]
The scaled heuristic $\phi'_\theta$ is consistent in a neighborhood $\mathcal{X}_h \subseteq \mathcal{X}$ of the safety boundary $\partial\mathcal{S} = \{\mathbf{x} \in \mathcal{X} : h(\mathbf{x}) = 0\}$. That is, for any $V_1, V_2 \in \mathcal{V}$, $h \in \mathcal{H}$:
\begin{equation}
    \phi'_\theta(\mathbf{x}, V_1, h) \leq \phi'_\theta(\mathbf{x}, V_2, h) \iff 
    \phi_\theta(\mathbf{x}, V_1, h) \leq \phi_\theta(\mathbf{x}, V_2, h)
    \label{eq:local_consistency}
\end{equation}
for all $\mathbf{x} \in \mathcal{X}_h = \{\mathbf{x} \in \mathcal{X} : f(h(\mathbf{x}), \sigma) \geq \epsilon\}$ for some threshold $\epsilon > 0$ determined by $\sigma$.
\label{prop:local_consistency}
\end{proposition}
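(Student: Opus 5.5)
The argument rests on one observation. For fixed $\mathbf{x}$ and $h$, the Gaussian factor $f(h(\mathbf{x}),\sigma)$ in \eqref{eq:cos_sim_scaled} depends only on $\mathbf{x}$ and $h$, not on the Lyapunov function $V$. When two objectives $V_1, V_2 \in \mathcal{V}$ are compared at the same state against the same barrier $h$, both scaled values therefore carry the same multiplier $c(\mathbf{x}) := f(h(\mathbf{x}),\sigma)$. So I will show that multiplying by a strictly positive common constant preserves the order. Then I will explain why restricting to $\mathcal{X}_h$ is what makes this usable in practice rather than merely true.

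\textbf{Step 1 (positivity of the scaling on $\mathcal{X}_h$).} Fix $\sigma>0$ and choose any $\epsilon$ with $0<\epsilon\le \frac{1}{\sigma\sqrt{2\pi}}$, which is the peak value of $f$, attained at $h=0$. The set $\mathcal{X}_h=\{\mathbf{x}: f(h(\mathbf{x}),\sigma)\ge\epsilon\}$ is then nonempty. Because $f$ is continuous and monotone in $|h|$, we have
\[
\mathcal{X}_h=\{\mathbf{x}: |h(\mathbf{x})|\le \sigma\sqrt{2\ln(1/(\epsilon\sigma\sqrt{2\pi}))}\}.
\]
This is a sublevel set of $|h|$ containing $\partial\mathcal{S}$. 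Since $h$ is continuous, it contains an open neighborhood of $\partial\mathcal{S}$ whenever $\epsilon$ is strictly below the peak. On $\mathcal{X}_h$ we have $c(\mathbf{x})\ge\epsilon>0$.

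\textbf{Step 2 (order preservation).} For $\mathbf{x}\in\mathcal{X}_h$ we have $\phi'_\theta(\mathbf{x},V_i,h)=c(\mathbf{x})\,\phi_\theta(\mathbf{x},V_i,h)$ for $i=1,2$. Since $c(\mathbf{x})>0$, the map $s\mapsto c(\mathbf{x})s$ is strictly increasing on $\mathbb{R}$, and this gives both directions of \eqref{eq:local_consistency}. Dividing by $c(\mathbf{x})$ recovers the reverse implication. We also need $\phi_\theta$ to be well defined, so we assume $\nabla V_i(\mathbf{x})\neq0$ and $\nabla h(\mathbf{x})\neq 0$. The second condition holds on $\partial\mathcal{S}$ because $0$ is a regular value of $h$, and by continuity it holds in a neighborhood of $\partial\mathcal{S}$; I will shrink $\epsilon$'s complement, i.e. raise $\epsilon$, if needed so that $\mathcal{X}_h$ lies inside that neighborhood. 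The first condition fails only at $\mathbf{x}=\mathfrak{g}_i$, where the goal is already satisfied by \eqref{eq:goal_constraint_satisfaction}, so those points are excluded. Combined with the earlier claim that $\phi_\theta$ orders objectives by conflict (the Conflict Heuristic definition), this makes $\phi'_\theta$ a consistent heuristic in the sense of Definition~\ref{def:consistent_heuristic} on $\mathcal{X}_h$. Theorem~\ref{thm:optimal_policy} then applies there.

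\textbf{Main obstacle.} Step 2 is really valid wherever $c>0$, and the Gaussian is positive everywhere. The substantive role of $\epsilon$ is therefore not the algebra but two practical points. First, it guarantees a nondegenerate scale, so the ordering is not lost to numerical underflow or drowned out when compared with the fixed switching thresholds used by the policy. Second, it ensures $\nabla h\neq0$. I will state this explicitly: the threshold $\epsilon$ is chosen as a function of $\sigma$, together with the regular-value neighborhood of $\partial\mathcal{S}$, so that $\mathcal{X}_h$ is both a genuine neighborhood of the boundary and a region where $\phi_\theta$ is defined.
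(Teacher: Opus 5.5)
Your proof is correct and follows the paper's argument. $f(h(\mathbf{x}),\sigma)>0$ is a common factor that does not depend on $V$, so multiplying by it preserves the ordering of $\phi_\theta$, and consistency of $\phi'_\theta$ is then inherited from $\phi_\theta$ through Definition~\ref{def:consistent_heuristic}.

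You go beyond the paper in three places: the explicit form of $\mathcal{X}_h$, well-definedness of $\phi_\theta$, and the observation that $f>0$ everywhere, so $\epsilon$ is not needed for the algebra. These are sound, with one caveat. Raising $\epsilon$ to force $\mathcal{X}_h$ into the region where $\nabla h\neq 0$ tacitly assumes the critical values of $h$ stay bounded away from $0$. This holds for the paper's circular CBF, whose only critical value is $-r_{\text{safe}}^2$.
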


\begin{proof}
Since $f(h(\mathbf{x}), \sigma) > 0$ for all $\mathbf{x} \in \mathcal{X}_h$, multiplication by $f(h(\mathbf{x}), \sigma)$ preserves the ordering of $\phi_\theta$ within $\mathcal{X}_h$ for consistent heuristic condition \eqref{eq:consistent_heuristic}. Hence, consistency follows directly from Definition~\ref{def:consistent_heuristic}. Outside $\mathcal{X}_h$,  the ordering is no longer meaningful since $\phi'_\theta \to 0$ for all $V_i \in \mathcal{V}$.
\end{proof}

Based on this heuristic, we define a policy $\pi_{\theta}$ that selects the least conflicted objectives for a given $\mathcal{V}(t)$ and $h$ as:

\begin{equation}
    \mathcal{V}^*(t) = \pi_{\theta}(\mathbf{x}, \mathcal{V}(t)) \big|_{h} = \argmin_{V_i \in \mathcal{V}(t)} \; \phi'_{\theta}(\mathbf{x}, V_i, h(\mathbf{x})) \\
    \label{eq:policy_cos_sim_scaled}
\end{equation}
where $\mathcal{V}^*$ is the set of CLF values associated with the least equally-conflicted goals at current time $t$. If $|\mathcal{V}^*(t)| > 1$, a unique CLF $V^*$ is selected from  $\mathcal{V}^*(t)$ via policy \eqref{eq:policy_min_clf}. To trigger switching while avoiding Zeno behavior \cite{zeno_sufficient_conditions}, we establish the following switching condition: 
\begin{equation}
    \left(t - t^{\text{sw}}_i \geq \tau\right) \wedge \left(\phi'_{\theta}(\mathbf{x}, V^*, h) \geq \epsilon_{\theta}\right)
    \label{eq:cos_sim_zeno_switching}
\end{equation}
where $t^{sw}_i$ is the last switch time, $\tau$ is the dwell-time threshold, and $\epsilon_{\theta}$ is the threshold for cosine-similarity angle between $\nabla V$ and $\nabla h$, as determined by \eqref{eq:cos_sim_scaled}.

\section{Experimental Results}

\begin{figure*}[t]
\begin{tikzpicture}

\def\framewidth{0.98\textwidth}
\def\frameheight{0.25\textwidth}

\begin{scope}[shift={(0, \frameheight)}]
  \node[anchor=south west, inner sep=0] at (0,0) {\includegraphics[width=\framewidth, height=\frameheight]{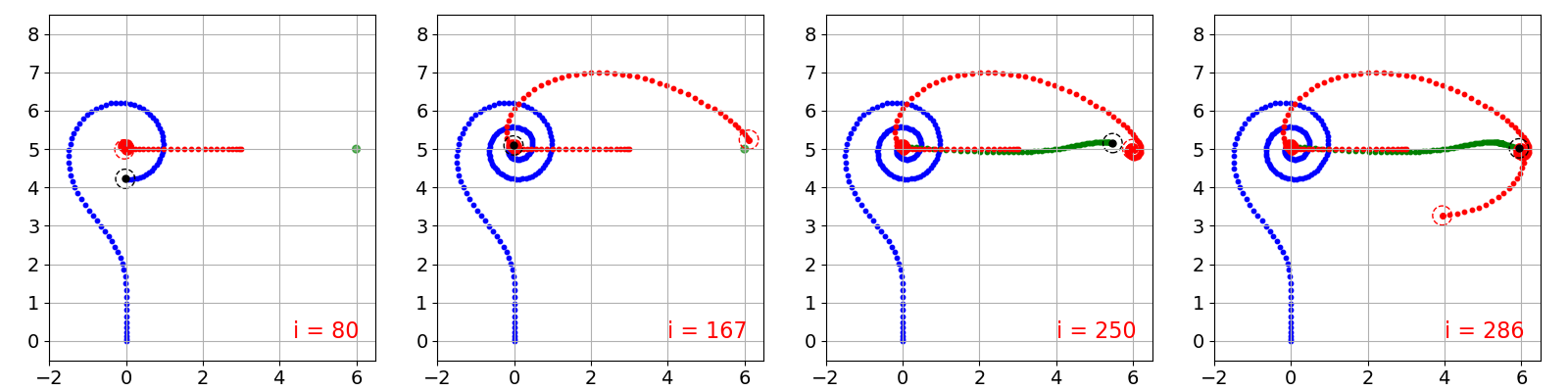}};
  \draw[thick] (0,0) rectangle (\framewidth, \frameheight);
\end{scope}

\begin{scope}[shift={(0, 0)}]  
  \node[anchor=south west, inner sep=0] at (0,0) {\includegraphics[width=\framewidth, height=\frameheight]{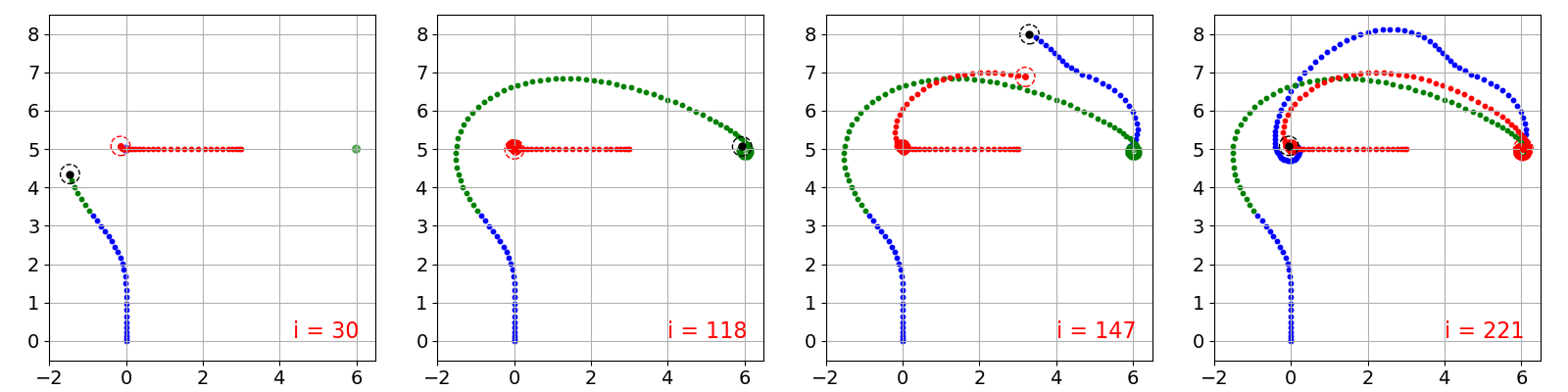}};
  \draw[thick] (0,0) rectangle (\framewidth, \frameheight);
\end{scope}

\node[anchor=east, font=\small\bfseries, rotate=90] at (-0.25, 0.65*\frameheight + \frameheight) {No switching};
\node[anchor=east, font=\small\bfseries, rotate=90] at (-0.25, 0.65*\frameheight) {Switching};

\end{tikzpicture}
\caption{\textbf{Patrolling scenario with unicycle dynamics.} A single agent must reach goals 1 (blue) and 2 (green) in sequence while avoiding a moving obstacle ``patrolling" the goals. \textbf{No switching (top):} The agent waits if the obstacle occupies a goal and proceeds once it is clear. \textbf{Switching (bottom):} The agent detects conflicts and redirects dynamically to avoid delays. Simulation parameters are given in Table~\ref{tab:sim_params_patrolling}.}
\label{fig:patrolling}
\end{figure*}

We evaluate the proposed conflict-aware switching policy \eqref{eq:policy_cos_sim_scaled} 
against the baseline sequential policy \eqref{eq:policy_next_clf} by measuring the total completion time (iterations) required to reach all goals. Performance is compared across three scenarios of increasing complexity:

\begin{enumerate}
    \item \textbf{Stationary obstacle:} A single agent navigating to two goals in the presence 
    of a stationary conflict-inducing obstacle. We vary the obstacle size to evaluate performance of our strategy against the baseline.
    \item \textbf{Patrolling obstacle:} A single agent operating under unicycle dynamics navigating to two goals guarded by a moving obstacle. 
    \item \textbf{Monte Carlo:} Multiple agents tasked with reaching their assigned goals while avoiding each other. We vary the numbers of goals and agents to evaluate scalability.
\end{enumerate}

\subsection{Scenario 1: Stationary Obstacle}
Consider a 2D driftless single-integrator of the form:
\begin{equation}
    \dot{\mathbf{x}} = 0.01 \begin{bmatrix} 1 & 0 \\ 0 & 1 \end{bmatrix} \mathbf{u}
    \label{eq:system}
\end{equation}
The following  CLF enforces a nominal control objective for goal $\mathbf{x}_{g}$:
\begin{equation}
    V(\mathbf{x}, \mathbf{x}_\mathfrak{g}) = \|\mathbf{x} - \mathbf{x}_\mathfrak{g}\|^2.
    \label{eq:vanilla_clf_1st_order}
\end{equation}
while safety constraint is enforced by a CBF of the form:
\begin{equation}
    h(\mathbf{x}) = \|\mathbf{x} - \mathbf{x}_{\text{obs}}\|^2 - r_{\text{safe}}^2.
    \label{eq:vanilla_cbf_first_order}
\end{equation}
Here $h(\mathbf{x})$ models a circular obstacle of radius $r_{\text{safe}}$ centered at $\mathbf{x}_{\text{obs}}$. The environment configuration is shown in Fig. \ref{fig:barrier_switching_success}.

\begin{figure} [H]
    \centering
    \begin{subfigure}{0.49\linewidth}
        \centering
        \includegraphics[width=\linewidth, height=\linewidth]{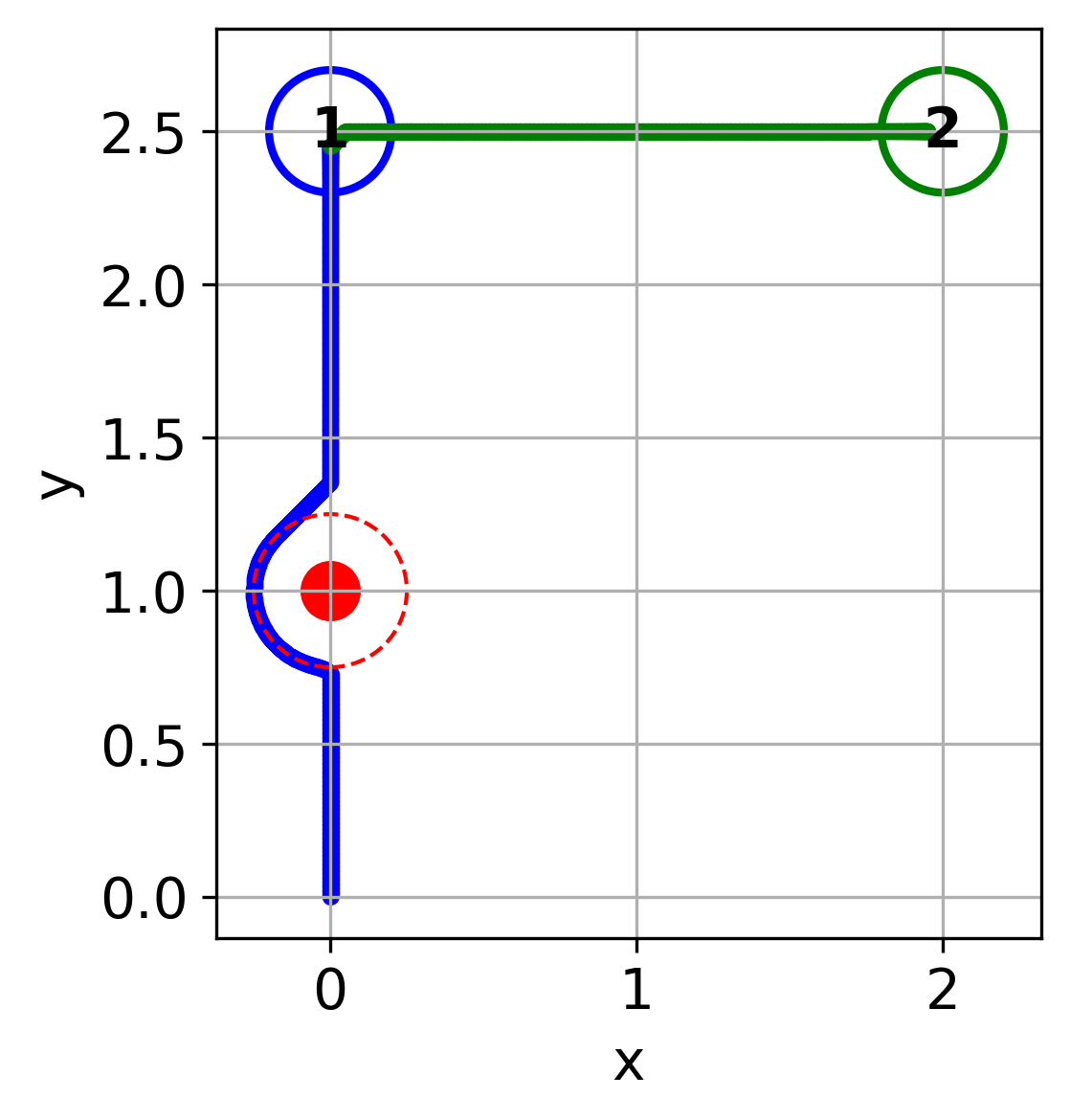}
        \caption{No switching}
        \captionsetup{justification=centering}
        \label{fig:barrier_switching_success_no_switch}
    \end{subfigure}
    \hfill
    \begin{subfigure}{0.49\linewidth}
        \centering
        \includegraphics[width=\linewidth, height=\linewidth]{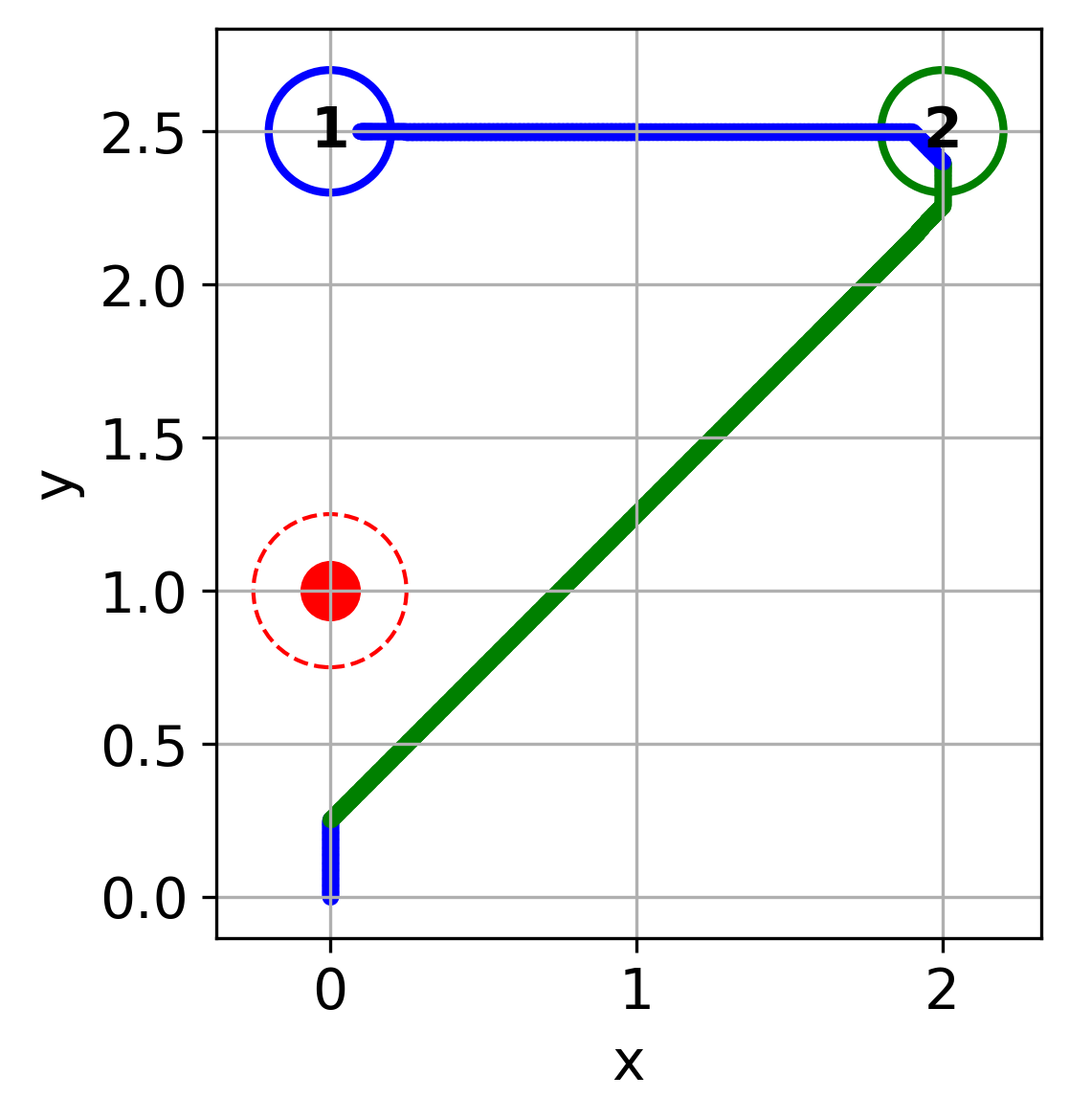}
        \caption{Switching}
        \captionsetup{justification=centering}
        \label{fig:barrier_switching_success_switch}
    \end{subfigure}
    \caption{\textbf{Stationary obstacle scenario:} A system is tasked with reaching goals $\mathcal{G} = \{\mathfrak{g}_1, \mathfrak{g}_2\}$. An obstacle induces a conflict with respect to the system's starting position at $g_1$. See simulation parameters in Table \ref{tab:sim_params_single_integrator}.}
    \label{fig:barrier_switching_success}
\end{figure}

The agent initially approaches goal 1, where the obstacle induces a conflict. Baseline switching policy $\pi_{seq}$ forces the agent to navigate around the obstacle. To increase conflict, we gradualy inflate the obstacle size gradually by increasing $r_{\text{safe}}$. As shown in Fig. \ref{fig:single_int_switching_trend}, $\pi_{seq}$ is faster for a very small obstacle size since switching to goal 2 would result in a longer overall trajectory. However, the total completion time increases linearly with the obstacle size due to the slowdown induced near the obstacle, as the agent is forced to accommodate conflicting nominal and conflicting constraints. Conversely, completion time decreases with $\phi'_{theta}$ as obstacle size increases: a larger $r_{safe}$ value increases the activation neighborhood (see Prop. \ref{prop:local_consistency}) for triggering an earlier switch for larger values of $h$.

\begin{figure}[H]
\centering
\begin{tikzpicture}
\begin{axis}[
    width=0.7\linewidth,
    height=0.6\linewidth,
    xlabel={Obstacle Size},
    ylabel={Iterations},
    grid=both,
    legend pos=north west,
    thick,
    xmin=0.1,
    xmax=0.375,
    xtick={0.0,0.05,0.10,0.15,0.20,0.25,0.30,0.35},
    xticklabel style={/pgf/number format/fixed},
    yticklabel style={/pgf/number format/1000 sep={}},
    scale only axis,
    font=\figfontsize,
    tick label style={font=\figfontsize},
    label style={font=\figfontsize},
    legend style={font=\figfontsize}
    ]

\addplot[
    color=red,
    mark=*,
    solid
] coordinates {
 (0.125, 2293)
 (0.15, 2299)
 (0.175, 2304)
 (0.2, 2308)
 (0.225, 2314)
 (0.25, 2319)
 (0.275, 2325)
 (0.3, 2329)
 (0.325, 2335)
 (0.35, 2340)
};

\addplot[
    color=blue,
    mark=*,
    solid
] coordinates {
 (0.125, 2321)
 (0.15, 2308)
 (0.175, 2300)
 (0.2, 2294)
 (0.225, 2292)
 (0.25, 2292)
 (0.275, 2287)
 (0.3, 2286)
 (0.325, 2285)
 (0.35, 2282)
};

\legend{$\pi_{seq}$, $\pi'_{\theta}$}

\end{axis}
\end{tikzpicture}
\caption{Comparison of No-Switching and Barrier-Switch as functions of obstacle size. See Table \ref{tab:sim_params_single_integrator} for a list of simulation parameters.}
\label{fig:single_int_switching_trend}
\end{figure}

\subsection{Patrolling Obstacle}

Consider a unicycle model with state $\mathbf{x} = [x, y, v, \theta]^T \in \mathbb{R}^4$, where $x, y$ denote the planar position, $v$ the linear velocity, and $\theta$ is the heading angle. The control input is $u = [a, \omega]^T \in \mathbb{R}^2$, where $a$ and $\omega$ denote the linear acceleration and angular velocity, respectively. The control-affine dynamics for the agent and obstacle are given by:
\begin{equation}
    \dot{\mathbf{x}} = f(\mathbf{x}) + g(\mathbf{x})\mathbf{u}
\end{equation}
where the drift and control matrix are:
\begin{equation}
    f(\mathbf{x}) = \begin{bmatrix} v\cos\theta \\ v\sin\theta \\ 0 \\ 0 \end{bmatrix}, \quad g(\mathbf{x}) = \begin{bmatrix} 0 & 0 \\ 0 & 0 \\ 1 & 0 \\ 0 & 1 \end{bmatrix}
\end{equation}
To measure nominal control objectives, we use the first order CLF as in \eqref{eq:vanilla_cbf_first_order}. However, due to a relative order of 2 with respect to the dynamics \cite{HOCLFs}, we use a proportional controller adapted from \cite{cbfkit} to enforce nominal control objectives. Given the agent state $\mathbf{x} = (x,\ y,\ v,\ \theta)$ and a goal position $\mathbf{g} \in \mathbb{R}^2$, the controller first computes the position error and desired heading angle:
\begin{align}
    \mathbf{e_{pos}} &= [e_x, e_y]^T = \mathbf{g} - [x, y]^T, \\
    \theta_d &= \arctan2(e_y,\ e_x).
\end{align}
The heading error is wrapped to $[-\pi, \pi]$ to ensure the shortest rotational path is taken:
\begin{equation}
    \theta_e = \left(\left[(\theta_d - \theta) + \pi \right] \bmod 2\pi\right) - \pi.
\end{equation}
The desired speed is set proportionally to distance to the goal $d = \|\mathbf{e}\|$, 
and clipped to a maximum of $2.0$ m/s:
\begin{equation}
    v_d = \min\left(K_p \cdot d,\ 2.0\right).
\end{equation}
Linear acceleration $a$ and angular velocity $\omega$ are then calculated as:
\begin{equation}
    \mathbf{u}_{\text{nom}} = \begin{bmatrix} a \\ \omega \end{bmatrix} = 
    \begin{bmatrix} K_p & 0 \\ 0 & K_\theta \end{bmatrix}
    \begin{bmatrix} v_d - v \\ \theta_e \end{bmatrix}
    \label{eq:u_nom}
\end{equation}
with proportional gains $K_p = 1.0$ and $K_\theta = 0.5$.

Since the first order control barrier function in \eqref{eq:vanilla_cbf_first_order} has a relative degree of 2 with respect to the unicycle model \cite{ECBFs}, safety is enforced by respecting second order CBF constraints of the form:
\begin{align}
    L_g L_f h\, \mathbf{u} \;\geq\;
    &  -k\cdot\begin{bmatrix} \zeta_1 & \zeta_2 \end{bmatrix} \begin{bmatrix} L_f h \\[4pt] h \end{bmatrix} - L_f^2 h
    \label{eq:cbf_rel_constraint}
\end{align}
where $\zeta_1=1.0$ and $\zeta_2=0.75$. $k=10.0$ is a linear gain. The optimal control input for the agent is given by:
\begin{equation}
\label{eq:qp_patrolling_agent}
\begin{aligned}
\mathbf{u}^{agent}_{\mathrm{opt}}(\mathbf{x})
= &\argmin_{(\mathbf{u},\rho)\in\mathbb{R}^{m+1}} 
&& \frac{1}{2} (\mathbf{u}-\mathbf{u_{nom}})^T(\mathbf{u}-\mathbf{u_{nom}}) + 2s\rho^2 \\
& \quad \text{s.t. } &&\eqref{eq:cbf_rel_constraint} \text{ holds for } h\\
&&& -\mathbf{u}_{\max} \le \mathbf{u} \le \mathbf{u}_{\max},\\
&&& \rho \ge 0.
\end{aligned}
\end{equation}
where $u_{nom}$ is calculated based on the currently selected goal. Here  $\rho$ is a slack variable which relaxes the nominal control objective to prioritize safety, and $s=50.0$ is the slack penalty value. The obstacle does not observe any safety constraints, and keeps switching between both goals by following $\pi_{seq}$ indefinitely. Therefore, the control input for the obstacle is calculated from \eqref{eq:u_nom} based on its selected goal. To force the obstacle to spend more time near goals, we enforce more restrictive control bounds and a tighter goal completion threshold for the obstacle as compared to the agent (see Table \ref{tab:sim_params_patrolling}).

Fig. \ref{fig:patrolling} shows a moving obstacle guarding two goals. The obstacle imposes a safety constraint on the agent. When the agent is operating under $\pi_{seq}$, it ends up waiting for the obstacle to clear the goal at both goal locations. In comparison, $\phi'_{\theta}$ allows the agent to switch to the alternate goal, and achieve a lower total completion time. This shows the compatibility of \eqref{eq:cos_sim_scaled} with high-order control constraints, where \eqref{eq:reis_collinearity_heuristic_original} would fail. 

\subsection{Multi-Agent-Multi-Goal Navigation}

To study the scalability of a switching-based reach-and-avoid approach, we consider a multi-agent-multi-goal scenario where a set of $r$ robots $\mathcal{R} = \{x_1, \dots, x_r\}$ navigate to $k$ goals $\mathcal{G} = \{\mathfrak{g}_1, \dots, \mathfrak{g}_k\}$ while avoiding each other, as shown in Algorithm~\ref{alg:multi_agent_multi_goal}. Each robots starts with an initial set of goal assignments $\mathcal{G}_i$. The objective of this scenario is for each agent to reach the terminal state $\mathcal{V}_i = \emptyset$. A safety constraint $h_{ij} \in \mathcal{H}_i$ imposes safety for agent $j$ with respect to agent $i$, where $\mathcal{H}_i$ is the set of all safety constraints observed by agent $i$. When agent $i$ completes all its objectives ($\mathcal{G}_i = \emptyset$), it remains at it's last-satisfied goal, and it's corresponding safety constraint is removed from consideration for switching for all remaining agents. 



    

\begin{algorithm} [h]
\caption{Mult-agent-multi-goal switching}
\begin{algorithmic}[1] 
\Require $\mathcal{R} = \{x_1, \dots, x_r\}$, $V$, $h$, $\pi$
\Require $\mathcal{G} = \{\mathcal{G}_1, \dots, \mathcal{G}_r\}$, \; $\mathcal{G}_i = \{\mathfrak{g}_{i,1}, \dots, \mathfrak{g}_{i,k}\}$ $\forall a_i \in \mathcal{R}$
\Ensure  $\mathcal{G} = \emptyset$ 
\For{$x_i \in \mathcal{R}$}
    \State $\mathcal{V}_i(0) \gets \{V_{i,1}, \dots, V_{i,k}\}$
    \State $\mathcal{H}_i(0) \gets \{h_{ij} : x_j \in \mathcal{R} \setminus \{x_i\}\}$
    \State $t_i^{\text{sw}} \gets -\tau$
    \State $V_i^* \gets \pi_{\text{clf}}(x_i, \mathcal{V}_i(0))$
\EndFor
\While{$\mathcal{G} \neq \emptyset$}
    \For{$x_i \in \mathcal{R}$}
       \If{$\mathcal{V}_i(t) = \emptyset$} \Comment{all goals completed}
            \State $\mathcal{H}_j(t) \gets \mathcal{H}_j(t) \setminus \{h_{ji}\}, \quad \forall a_j \in \mathcal{R} \setminus \{a_i\}$
            \State $u_i \gets 0$
        \Else
            \For{$\mathfrak{g}_{i, k} \in \mathcal{G}_i$} \Comment{check goal completion}
                \If{$\|x_i - \mathfrak{g}_{i, k}\|^2 \leq \epsilon_\mathfrak{g}$}
                    \State $\mathcal{V}_i(t) \gets \mathcal{V}_i(t) \setminus \{V_{i, k}\}$
                    \State $\mathcal{G}_i(t) \gets \mathcal{G}_i(t) \setminus \{\mathfrak{g}_{i, k}\}$
                \EndIf 
            \EndFor
            \If{$\mathcal{H}_i(t) \neq \emptyset$} \Comment{if other agents are active}
               \State $i_{min} = \argmin_{h \in \mathcal{H}_i(t)} \pi'_{\theta}(x, \mathcal{V}(t)) \big|_{h}$
            \Else
                \State $i_{min} = \pi_{\text{clf}}(x, \mathcal{V}(t))$
            \EndIf
            \If{$\mathfrak{g}_i^* \neq \mathfrak{g}_{i_{min}}$}
                \If{$(t - t_i^{\text{sw}} \geq \tau)$}
                \State $\mathfrak{g}_i^* = \mathfrak{g}_{i_{min}}$
                \State $V_i^* = V_{i_{min}}$
                \State $t_i^{\text{sw}} \gets t$
            \EndIf
            \EndIf
            \State $u_i \gets \mathbf{u}^*(x_i, V_i^*, \mathcal{H}_i(0))$\Comment{solve QP \eqref{eq:ncbf_nclf_qp}}
        \EndIf
    \EndFor
\EndWhile
\end{algorithmic}
\label{alg:multi_agent_multi_goal}
\end{algorithm}

\begin{figure}[h]
    \centering
        \begin{tikzpicture}
            \begin{axis}[
                width=0.8\linewidth,
                height=0.6\linewidth,
                xlabel={Scenario},
                ylabel={Timeout (\%)},
                xtick={1,2,3,4,5},
                xticklabels={
                    \shortstack{6 Agents\\3 Goals},
                    \shortstack{9 Agents\\6 Goals},
                    \shortstack{12 Agents\\7 Goals},
                    \shortstack{15 Agents\\10 Goals},
                    \shortstack{19 Agents\\14 Goals}
                },
                grid=both,
                legend pos=north west,
                thick,
                yticklabel style={/pgf/number format/fixed},
                scale only axis,
                font=\figfontsize,
                tick label style={font=\figfontsize},
                label style={font=\figfontsize},
                legend style={font=\figfontsize}
            ]
            
            \addplot[
                color=red,
                mark=*,
                solid
            ] coordinates {
            (1,20.8)
            (2,25.2)
            (3,35.4)
            (4,41.0)
            (5,51.0)
            };
            
            \addplot[
                color=blue,
                mark=*,
                solid
            ] coordinates {
            (1,12.6)
            (2,26.4)
            (3,32.0)
            (4,33.4)
            (5,33.0)
            };
            
            \legend{$\pi_{seq}$, $\pi'_{\theta}$}
            \end{axis}
        \end{tikzpicture}
    \caption{Timeout percentage across scenarios.}
    \label{fig:single_int_monte_carlo_timeout_percentage}
\end{figure}

\begin{figure}[h]
\centering
    \begin{tikzpicture}
        \begin{axis}[
            width=0.8\linewidth,
            height=0.9\linewidth,
            xlabel={Scenario},
            ylabel={Average Completion Time},
            xtick={1,2,3,4,5},
            xticklabels={
                \shortstack{6 Agents\\3 Goals},
                \shortstack{9 Agents\\6 Goals},
                \shortstack{12 Agents\\7 Goals},
                \shortstack{15 Agents\\10 Goals},
                \shortstack{19 Agents\\14 Goals}
            },
            grid=both,
            legend pos=north west,
            thick,
            yticklabel style={/pgf/number format/1000 sep={}},
            scale only axis,
            font=\figfontsize,
            tick label style={font=\figfontsize},
            label style={font=\figfontsize},
            legend style={font=\figfontsize}
        ]
        
        \addplot[
            color=red,
            mark=*,
            solid
        ] coordinates {
        (1,2294.756)
        (2,2842.158)
        (3,3933.484)
        (4,4572.04)
        (5,5616.236)
        };
        
        \addplot[
            color=blue,
            mark=*,
            solid
        ] coordinates {
        (1,1528.988)
        (2,3046.512)
        (3,3748.55)
        (4,4066.132)
        (5,4259.386)
        };
        
        \addplot[
            color=red,
            mark=*,
            dashed
        ] coordinates {
        (1,271.16)
        (2,430.69)
        (3,609.11)
        (4,800.07)
        (5,1053.54)
        };
        
        \addplot[
            color=blue,
            mark=*,
            dashed
        ] coordinates {
        (1,307.77)
        (2,552.33)
        (3,806.69)
        (4,1090.29)
        (5,1431.92)
        };
        
        \legend{
        $\pi_{seq}$ (with timeouts),
        $\pi'_{\theta}$ (with timeouts),
        $\pi_{seq}$ (no timeouts),
        $\pi'_{\theta}$ (no timeouts)
        }
        
        \end{axis}
    \end{tikzpicture}
    \caption{Average completion time with and without timeouts.}
    \label{fig:single_int_monte_carlo_completion_time}
\end{figure}

We increase the conflict by increasing the number of agents and goals within a fixed simulation space in $\mathbb{R}^2$, from 6 agents with 3 goals assigned to each agent, to 19 agents with 14 goals. Agent/goal locations and goal assignments are randomized over 500 monte-carlo runs for each case. We limit the iterations to 10,000. As shown in Fig. \ref{fig:single_int_monte_carlo_completion_time}, $\pi'_{\theta}$-based switching is able to achieve a much lower average completion time across all 500 runs when considering timed-out experiments, whereas $\pi_{seq}$ leads to higher completion times with increasing agents/goals. This trend is also observed in Fig. \ref{fig:single_int_monte_carlo_timeout_percentage}, where the timeout percentage plateaus with $\pi'_{\theta}$-based switching, and increases monotonically for $\approx 33.4 \%$. This indicates the ability of a consistent-heuristic based switching policy to perform better without fine-tuning for each spatial configuration. Completion times reported without timeouts indicate scenarios where a sequential goal-switching strategy occasionally performs better due to particular spatial configurations.

\section{Conclusion}

In this work, we highlighted the need for adopting an optimal and complete switching strategy for multi-goal reach-and-avoid problems using CLF-CBF QPs to increase performance with respect to completion time for satisfying all goals. We established the conditions under which a heuristic and corresponding switching policy can be used to detect and switch to the lower-conflicted goal, whenever one is available. We identified a consistent alignment-based heuristic. Using this heuristic, we demonstrated that a switching policy can reduce completion time as conflict increases under three multi-goal navigation scenarios of increasing complexity: single agent avoiding a stationary obstacle, a single agent avoiding a moving obstacle, and a multi-agent-multi-goal navigation scenario.

\bibliographystyle{IEEEtran}
\bibliography{bibliography}

\appendix

\begin{table}[h]
\centering
\caption{Sim Params: Single Integrator Switching Comparison}
\label{tab:sim_params_single_integrator}
\begin{tabular}{lll}
\hline
\textbf{Parameter} & \textbf{Value} & \textbf{Description} \\
\hline
$\Delta t$ & $0.01$ s & Time step \\
$N$ & $2500$ & Total iterations \\
$u_{\max}$ & $(0.2,\ 0.2)$ & Maximum control input \\
$r_{\text{safe}}$ & $0.1$--$0.35$ & CBF safety radius \\
$\tau_{\text{dwell}}$ & $8$ s / 800 iter & Minimum dwell time \\
$\epsilon_{\theta}$ & $0.1$ & Similarity angle threshold \\
$\epsilon_{\text{goal}}$ & $0.1$ & Goal completion threshold \\
$p$ & $1000.0$ & Slack penalty \\
$\alpha$ & $10.0$ & CBF gain \\
$\gamma$ & $2.5$ & CLF gain \\
\hline
\end{tabular}
\end{table}

\begin{table}[h]
\centering
\caption{Sim Params: Unicycle Dynamics Patrolling Scenario}
\begin{tabular}{lll}
\hline
\textbf{Parameter} & \textbf{Value} & \textbf{Description} \\
\hline
$\Delta t$ & $0.1$ s & Time step \\
$N$ & $500$ & Total iterations \\
$u_{\max, agent}$ & $(5.0,\ 5.0)$ & Maximum control input (agent) \\
$u_{\max, obs}$ & $(2.5,\ 25.0)$ & Maximum control input (obstacle) \\
$r_{\text{safe}}$ & $0.5$ & CBF safety radius \\
$\tau_{\text{dwell}}$ & $4.0$ s / 40 iter & Minimum dwell time \\
$\epsilon_{\text{goal, agent}}$ & $0.1$ & Agent goal completion threshold \\
$\epsilon_{\text{goal, obs}}$ & $0.01$ & Obstacle goal completion threshold \\
$\epsilon_{\theta}$ & $0.1$ & Similarity angle threshold \\
$p$ & $10$ & Slack penalty \\
$\alpha$ & $10$ & CBF gain \\
$\gamma$ & $1.0$ & CLF gain \\
\hline
\end{tabular}
\label{tab:sim_params_patrolling}
\end{table}

\begin{table}[h]
\centering
\caption{Sim Params: Single Integrator Monte Carlo Experiments}
\begin{tabular}{lll}
\hline
\textbf{Parameter} & \textbf{Value} & \textbf{Description} \\
\hline
$\Delta t$ & $0.1$ s & Time step \\
$N$ & $2000$ & Maximum iterations \\
$u_{\max}$ & $(0.25,\ 0.25)$ & Maximum control input \\
$r_{\text{safe}}$ & $0.1$ & CBF safety radius \\
$\tau_{\text{dwell}}$ & $4.0$ s / 40 iter & Minimum dwell time \\
$\epsilon_{\text{goal}}$ & $0.25$ & Goal completion threshold \\
$\epsilon_{\theta}$ & $0.1$ & Similarity angle threshold \\
$p$ & $1.0$ & Slack penalty \\
$\alpha$ & $1.0$ & CBF gain \\
$\gamma$ & $10.0$ & CLF gain \\
\hline
\end{tabular}
\label{tab:sim_params_single_int_monte_carlo}
\end{table}

\end{document}